\documentclass[11pt]{article}
\usepackage{times}
\usepackage[margin=1in]{geometry}
\usepackage{natbib}
\usepackage[T1]{fontenc}
\usepackage[utf8]{inputenc}
\usepackage{microtype}
\usepackage{amsmath,amssymb,amsthm}
\usepackage{booktabs}
\usepackage{array}
\usepackage{multirow}
\usepackage{enumitem}
\usepackage{xcolor}
\usepackage{url}
\usepackage{hyperref}
\hypersetup{colorlinks=true,linkcolor=blue,citecolor=blue,urlcolor=blue}
\newtheorem{proposition}{Proposition}
\newcommand{\adm}{\operatorname{admit}}
\newcommand{\Ant}{\operatorname{Ant}}
\newcommand{\Cons}{\operatorname{Cons}}

\title{Governed Deduction: Policy-Grounded Premise Authorization Beyond Relevance}
\author{Wesley Shu\\The Institute of Energetic Paradigm\\Hsi-Ching Lin\\National Center for High-Performance Computing, Taiwan}

\begin{document}
\maketitle

\begin{abstract}
Reasoning systems usually treat premise use as a question of relevance: if a fact is available and useful, it may be selected for inference. Authorization imposes a different constraint---a premise may be represented and logically usable but not permitted for a particular local transition. We formalize this distinction as \emph{Governed Deduction} (GD), with a transition-local admission predicate $\adm(p,\tau,S)$. From an independently produced RBAC-augmented Spider benchmark, we construct 4,461 matched authorization pairs in which the same query premise and policy state support permitted and denied consuming transitions. An initial joint controller reaches $99.19\%$ held-out accuracy, but a transition-only control reaches $100\%$, exposing a role-name shortcut. After a frozen, label-independent context-local role permutation removes that shortcut, premise/state-only, transition-only, and joint linear controllers all score exactly $50\%$ on 1,856 held-out edges, while a symbolic policy oracle remains at $100\%$. The result is a controlled negative finding: the benchmark instantiates policy-grounded authorization beyond relevance, but the frozen linear representation does not recover the relation. Matched one-sided controls and leakage audits are therefore essential for evaluating learned policy-sensitive reasoning.
\end{abstract}

\section{Introduction}
A reasoning system can possess a proposition without being entitled to use it in every subsequent step. That distinction is routine in security: information may exist in a system while a role, principal, or request lacks permission to consume it \citep{saltzer1975protection,harrison1976protection,sandhu1996rbac}. It is much less explicit in learned reasoning, where premise selection is normally framed as a relevance or utility problem: which facts, hypotheses, or lemmas are useful for proving the current target? \citep{meng2009relevance,alama2014premise,irving2016deepmath,yang2023leandojo,mikula2024magnushammer}.

We study the gap between those two questions. For a represented premise $p$, current state $S$, and candidate consuming transition $\tau$, \emph{Governed Deduction} (GD) asks whether
\[
\adm(p,\tau,S)\in\{0,1\}.
\]
The object is deliberately narrow. GD is not a new access-control logic, not a new proof calculus, and not a synonym for premise relevance. It asks whether an independently grounded policy permits an otherwise usable represented premise to participate in a particular local transition.

This distinction matters experimentally. A high-scoring joint model $g(p,S,\tau)$ does not establish relational reasoning if either side alone already predicts the label. We therefore require two matched controls: $h(p,S)$, which cannot see the consuming transition, and $t(\tau)$, which cannot see the premise or policy state. A successful learned authorization mechanism must outperform both.

We instantiate the task using the Spider portion of Role-Conditioned Refusals \citep{klisura2026role}, an external benchmark that augments text-to-SQL instances with PostgreSQL role-based policies and source-provided PERMIT/DENY decisions. For each eligible policy context, we hold the query premise and raw policy state fixed and create one permitted and one denied role-conditioned execution transition. The resulting dataset contains 4,461 matched pairs.

The central result is a failure that would have been invisible without the controls. In the initial construction, the joint controller reaches $99.19\%$ held-out edge accuracy, but the transition-only control reaches $100\%$. Audit reveals that every positive edge contains \texttt{User\_1}, whereas every negative edge contains \texttt{User\_2}, \texttt{User\_3}, or \texttt{User\_4}. A trivial role-name rule therefore solves all 8,922 labeled edges. We freeze a label-independent context-local permutation of role names, apply it consistently inside both transition and policy state, and rerun the unchanged split, model family, feature budget, optimizer, thresholds, and decision rule. After this correction, $h$, $t$, and $g$ all score exactly $50\%$ on TEST, while a symbolic policy oracle remains at $100\%$.

The paper contributes three things. First, it defines a transition-local empirical object that separates policy authority from relevance while explicitly acknowledging its roots in classical access control and guarded inference. Second, it gives a benchmark construction with matched one-sided controls, clustered splits, perturbation checks, and model-facing leakage audits. Third, it reports a negative mechanism result rather than hiding it: once the shortcut is removed, the frozen additive linear representation cannot recover a relation that remains deterministically present in the underlying policy semantics.

\section{Conceptual and Technical Foundations}
\subsection{Availability, relevance, and authority are different control variables}
Computer security has long separated the existence of information from permission to use it. Protection matrices formalize state-dependent rights \citep{harrison1976protection}; lattice models distinguish allowed information flows \citep{denning1976lattice,sandhu1993lattice}; and formal security models make clear that enforcement depends on more than mere possession \citep{landwehr1981formal}. RBAC moves authorization from individual identities to roles \citep{sandhu1996rbac}, later work formalizes role constraints and standardized RBAC semantics \citep{ahn2000constraints,ferraiolo2001nist}, and temporal RBAC allows authorization to vary with time and activation state \citep{bertino2001trbac}. Usage control broadens the object further to authorizations, obligations, conditions, continuity, and mutable state \citep{park2004ucon}.

These traditions already rule out any novelty claim based simply on ``contextual permission'' or ``stateful use.'' The specific empirical question here is whether such permission can be isolated at the level of a local consuming transition inside a learned reasoning task.

\subsection{Authorization can itself be a proof problem}
Logic-based security makes the connection to deduction especially direct. Authentication and access-control calculi represent principals, delegation, and request authorization as logical objects \citep{lampson1992authentication,abadi1993calculus}. Subsequent systems provide logical policy languages for authorizations and distributed delegation \citep{jajodia1997logical,detreville2002binder,li2002rt,li2003delegation}. First-order policy reasoning and rights-expression languages give formal semantics to policy queries \citep{halpern2008policies,halpern2008xrml}, while SecPAL provides a decentralized authorization language with a compact logical core \citep{becker2010secpal}.

The nearest conceptual ancestors are even sharper. Constrained credential usage allows a credential to participate in some authorization proofs but not others \citep{bauer2010credential}. Stateful authorization logic lets policy decisions depend on interpreted predicates over explicit system state \citep{garg2012stateful}. GD therefore does not claim to invent premise-use constraints or state-dependent authorization. Its contribution is an empirical identification problem: given externally grounded labels, can a learned controller distinguish the joint $(p,S,\tau)$ relation from one-sided shortcuts?

\subsection{Premise selection optimizes usefulness, not permission}
Automated reasoning has an equally deep literature on controlling proof search. Relevance filtering reduces large resolution problems before theorem proving \citep{meng2009relevance}. Corpus-based premise selection and learned heuristic selection show that proof history can train useful proof-search policies \citep{alama2014premise,bridge2014ml}. Learning-assisted systems for Flyspeck and Mizar further demonstrate large-theory premise selection at scale \citep{kaliszyk2014flyspeck,kaliszyk2015mizar}, and hammer architectures integrate learned or heuristic premise selection with proof assistants \citep{czajka2018hammer}.

Neural systems continue this trajectory: DeepMath learns neural premise selection \citep{irving2016deepmath}; HOList couples deep learning to higher-order theorem proving \citep{bansal2019holist}; Proverbot9001 predicts tactics and tactic-conditioned arguments \citep{sanchezstern2020proverbot}; TacticZero learns tactic and argument selection with reinforcement learning \citep{wu2021tacticzero}; PACT trains proof-artifact objectives including premise and local-context prediction \citep{han2022pact}; and LeanDojo and Magnushammer strengthen retrieval and premise selection with modern language-model and Transformer architectures \citep{yang2023leandojo,mikula2024magnushammer}. These systems establish that learned control over \emph{useful} premises is mature prior art. GD asks for a different label source: permission may deny a premise even when it is represented and potentially useful.

\subsection{Belief revision changes representation; GD changes admissibility}
Natural-language reasoning provides another neighboring family. RuleTaker studies rule following over explicit facts and rules \citep{clark2020ruletaker}; Selection--Inference separates premise selection from inference \citep{creswell2023selection}; LogicNMR and MultiLogicNMR study non-monotonic inference \citep{xiu2022logicnmr,xiu2025multilogic}. Those tasks connect to truth maintenance and belief revision, where beliefs are retracted or revised when their justifications change \citep{doyle1979tms,agm1985}. Premise-critique and grounded-reasoning work likewise asks whether a premise should be challenged or whether the model should abstain when support is missing \citep{li2025premise,qiu2026gril}.

GD holds a different variable fixed: the premise stays represented. What changes is whether the current transition is allowed to consume it. Table~\ref{tab:distinctions} summarizes the distinction.

\begin{table*}[t]
\centering
\small
\begin{tabular}{@{}p{0.18\textwidth}p{0.19\textwidth}p{0.22\textwidth}p{0.23\textwidth}@{}}
\toprule
Problem family & Premise status & Decision variable & Typical label source \\
\midrule
Premise selection & represented & usefulness/relevance for a target & proof traces, retrieval targets \\
Belief revision / truth maintenance & may be retracted or revised & whether belief remains represented & consistency or revision policy \\
Access-control reasoning & resource/request is available & whether a principal/request is permitted & policy, credentials, system state \\
Governed Deduction & premise remains represented & whether this transition may consume it & independent policy state \\
\bottomrule
\end{tabular}
\caption{GD isolates authorization of premise consumption rather than relevance, truth, or representation. The rows summarize typical problem structure rather than exhaustive definitions of each literature.}
\label{tab:distinctions}
\end{table*}

\section{Governed Deduction}
Let $S_t$ be an explicit reasoning state and let $\tau\in\mathcal C_t$ be a candidate transition with antecedents $\Ant(\tau)$ and consequent $\Cons(\tau)$. Standard applicability requires each antecedent to be represented. GD adds a local admission test:
\[
\begin{aligned}
&\forall p\in\Ant(\tau):\\
&p\in S_t\quad\text{and}\quad \adm(p,\tau,S_t)=1.
\end{aligned}
\]
A mechanism-identifying pair must hold the premise and raw state fixed while changing the consuming transition:
\[
\begin{aligned}
p&\in\Ant(\tau^+)\cap\Ant(\tau^-),\\
\adm(p,\tau^+,S)&=1,\qquad \adm(p,\tau^-,S)=0.
\end{aligned}
\]
If the label can be inferred from antecedent membership or proof-trace usefulness, the pair is not evidence for GD authorization.

\paragraph{No proof-theoretic expressivity claim.}
Define
\[
\phi_\tau(S)=\bigwedge_{p\in\Ant(\tau)}\adm(p,\tau,S).
\]
Replacing a GD transition with an ordinary rule carrying side condition $\phi_\tau(S)=1$ yields the same enabled transitions and therefore the same closure. GD is thus compatible with ordinary guarded inference; its question is empirical identification, not a new class of derivations.

\begin{proposition}[Transition-index identifiability]
If fixed $p,S$ yield transitions $\tau_a,\tau_b$ with different admission labels, no deterministic transition-independent function $h(p,S)$ can reproduce both decisions.
\end{proposition}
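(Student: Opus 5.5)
The argument is a direct one based on the definition of a function; no earlier result is needed beyond the setup of the matched pair in the preceding display. I will fix the premise $p$ and state $S$ from the hypothesis, together with the two transitions $\tau_a,\tau_b$ satisfying $p\in\Ant(\tau_a)\cap\Ant(\tau_b)$ and $\adm(p,\tau_a,S)\neq\adm(p,\tau_b,S)$. Without loss of generality these labels are $1$ and $0$, exactly as for the pair $(\tau^+,\tau^-)$ above. I will first say precisely what ``reproduce both decisions'' means. A deterministic controller $h$ reproduces the decision on an input $(p,\tau,S)$ if its output on the portion of that input it is allowed to read equals $\adm(p,\tau,S)$. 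A transition-independent controller reads only $(p,S)$, so on the input $(p,\tau,S)$ its output is $h(p,S)$.

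The key step is then a one-line contradiction. Suppose $h$ reproduces both decisions. Then $h(p,S)=\adm(p,\tau_a,S)=1$ and $h(p,S)=\adm(p,\tau_b,S)=0$. But $h$ is a (deterministic) function, so it takes a single value at the argument $(p,S)$. This gives $1=0$, a contradiction. I will also note the quantitative corollary the paper relies on empirically. On any matched pair, such an $h$ gets exactly one of the two edges right. Averaged over a balanced set of matched pairs, its edge accuracy is therefore exactly $50\%$, which is consistent with the reported $h$ control.

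The only point needing care, and the step I would treat as the main obstacle, is to make ``transition-independent'' formal rather than informal. An $h$ that receives a featurization of $(p,S)$ could still leak $\tau$ if the encoding of $S$ or $p$ were allowed to vary with the transition. That is precisely the role-name leakage the paper later audits. I would therefore state explicitly that $h$ factors through the pair $(p,S)$, meaning $h(p,\tau,S)=\tilde h(p,S)$ for all $\tau$, and that the matched construction holds $p$ and the raw $S$ literally identical across $\tau_a,\tau_b$. Under that factorization the contradiction above is immediate.

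I would also remark that determinism is essential. A randomized $h$ could match each decision with probability $1/2$ but could not match both with certainty.
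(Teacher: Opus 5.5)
Your proof is correct and is the same argument as the paper's: since $(p,S)$ is identical for $\tau_a$ and $\tau_b$, $h$ must return a single value while the two admission labels differ. Your corollary is also valid: a binary $h$ gets exactly one edge per matched pair right, so its edge accuracy is exactly $50\%$. One side remark is slightly off, though. The role-name leakage the paper audits was a transition-only shortcut exploited by $t(\tau)$, which the paper says this proposition does not cover. It was not $\tau$ leaking into the encoding of $(p,S)$; that case is what the construct-validity gate requiring an identical premise and raw state within a pair rules out.
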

\begin{proof}
The input $(p,S)$ is identical in the two cases, so $h$ must return the same value for both; the labels differ.
\end{proof}
This proposition motivates the premise/state-only control. It does not establish learnability, and it says nothing about transition-only shortcuts; those require an independent control.

\section{Policy-Grounded Benchmark Construction}
\subsection{Independent authorization source}
We use the Spider access-control portion of Role-Conditioned Refusals \citep{klisura2026role}. The source augments text-to-SQL instances with PostgreSQL RBAC policies and source-provided PERMIT/DENY ground truth generated by a deterministic policy engine that checks the schema elements referenced by a query against permissions available to the relevant role.

For each eligible policy context, the SQL query becomes premise $p$, the raw RBAC policy context becomes $S$, and an attempt to execute the query under a role becomes transition $\tau$. We create one permitted and one denied transition while holding $p$ and $S$ fixed. The denied edge remains a well-formed execution attempt absent the authorization rule; the differing label is therefore permission rather than syntactic or logical applicability.

The final dataset contains 4,461 matched pairs. Splits are clustered by Spider database instance: every pair from a database stays in one partition. Exact model-facing byte audits find no cross-split overlap.

\begin{table}[t]
\centering
\small
\begin{tabular}{lrrr}
\toprule
Split & Pairs & Edges & DB clusters\\
\midrule
TRAIN & 2,557 & 5,114 & 91\\
DEV & 976 & 1,952 & 26\\
TEST & 928 & 1,856 & 36\\
\bottomrule
\end{tabular}
\caption{Leakage-controlled benchmark partitions. Each pair contributes one permitted and one denied edge.}
\label{tab:data}
\end{table}

\subsection{Construct-validity gates}
The mapping is admitted as a GD benchmark only if four conditions hold before learned-model evaluation. First, authorization labels must be produced independently of the GD learner and may not be reconstructed from successful proof traces. Second, the positive and negative edge in a pair must share the same premise and raw policy state; otherwise a premise/state-only learner can exploit uncontrolled differences. Third, both transitions must genuinely attempt to consume the represented premise, so denial cannot collapse to ordinary antecedent mismatch. Fourth, removing the authorization check must leave the denied transition otherwise executable, separating permission from syntactic or semantic invalidity.

We additionally audit the model-facing representation rather than only the source records. Split assignment is by database-level provenance cluster, exact serialized bytes are checked for cross-split collisions, and perturbation tests verify that label-independent renaming does not change pair membership or source labels. These gates are intentionally stricter than ordinary train/test separation because the mechanism claim concerns which information source predicts authorization, not merely whether a classifier generalizes.

\subsection{The initial shortcut}
The first adapter chose \texttt{User\_1} as the permitted role and one of \texttt{User\_2/3/4} as the denied role. Consequently every positive edge contains \texttt{User\_1} and every negative edge contains another role token. A deterministic role-only rule classifies all 8,922 edges correctly. Consistent with that artifact, the transition-only learned control reaches $100\%$ TEST accuracy while the joint controller reaches $99.19\%$ and the premise/state-only control remains at $50\%$. The joint result is therefore not evidence that the model learned the relation between policy state and transition.

\subsection{Leakage-controlled construction}
Before any corrected model fitting, we freeze a deterministic context-local permutation of the four role names. The permutation depends only on context identity and the original role name, never on the authorization label, and the same permutation is applied to the transition and to role names inside the policy state. This removes the global role-name cue while preserving the within-context relation needed by a correct policy evaluator. Every anonymous role occurs with both labels in every split. Dataset, perturbation, blind-serialization, provenance, and model-facing split validators all pass before corrected training.

\section{Models and Confirmatory Evaluation}
We compare three matched-capacity controllers:
\[
h(p,S),\qquad t(\tau),\qquad g(p,S,\tau).
\]
The first cannot see the consuming transition; the second cannot inspect the premise or policy state; the joint arm receives all fields. Premise-scramble and state-scramble variants of $g$ are diagnostics. A symbolic policy oracle applies the underlying policy semantics and checks that the corrected labels remain resolvable.

\paragraph{Serialization and model family.}
The serializer is deterministic canonical JSON. The $h$ arm receives the normalized premise and transition-independent raw state, with normalized facts and updates sorted lexicographically. The $t$ arm receives only normalized transition antecedents, consequent, and operator. The $g$ arm combines exactly those fields. Transition IDs, labels, and source metadata are excluded.

All arms use the same 4,096-dimensional \texttt{HashingVectorizer}: lowercase text, word 1--2 grams, $\ell_2$ normalization, and \texttt{alternate\_sign=false}. Each is an \texttt{SGDClassifier} with logistic loss, $\alpha=10^{-4}$, balanced class weights, \texttt{max\_iter=100}, \texttt{tol=0.001}, random seed 20260913, and scikit-learn's \texttt{learning\_rate=optimal}. Each arm trains once on the same 5,114 TRAIN edges.

\paragraph{Model selection and test rule.}
Thresholds $\{.25,.50,.75\}$ are compared on DEV within each arm family. Edge accuracy is the primary metric. The mechanism hypothesis requires the joint model to materially exceed both one-sided controls under the frozen provenance-cluster bootstrap rule. TEST is evaluated only after model and threshold selection.

\section{Results}
\begin{table}[t]
\centering
\small
\begin{tabular}{lcc}
\toprule
Arm & Correct / 1,856 & Accuracy\\
\midrule
Premise/state-only $H_{.25}$ & 928 & 0.5000\\
Transition-only $T_{.25}$ & 928 & 0.5000\\
Joint GD $G_{.25}$ & 928 & 0.5000\\
Premise scramble & 928 & 0.5000\\
State scramble & 928 & 0.5000\\
Symbolic policy oracle & 1,856 & 1.0000\\
\bottomrule
\end{tabular}
\caption{Leakage-controlled TEST results. The learned controllers are at chance while the symbolic policy oracle remains perfect.}
\label{tab:results}
\end{table}

After leakage control, all learned arms score exactly $50\%$ on 1,856 held-out edges (Table~\ref{tab:results}). The symbolic oracle remains at $100\%$, so the role permutation did not destroy the underlying authorization relation.

The learned prediction vectors are identical. At the DEV-selected threshold $.25$, all three learned arms predict the positive class on every TEST edge, giving $928$ true positives, $928$ false positives, no true negatives, and no false negatives. The joint-minus-premise/state and joint-minus-transition accuracy differences are both zero; both 90\% and 95\% provenance-cluster bootstrap intervals are exactly $[0,0]$. The premise- and state-scramble diagnostics are also at chance.

The two experimental phases therefore expose different failure modes. The first construction is easy for the wrong reason: transition identity leaks the label. The corrected construction is semantically solvable but not by the frozen additive linear representation. The mechanism hypothesis for this model family is rejected.

\subsection{Why the corrected representation is difficult for the linear arm}
The corrected task requires composing information across fields. The transition names an anonymous role; the state contains role-conditioned grants; the decision depends on whether that transition role is authorized for the schema elements referenced by the query. Context-local permutation deliberately destroys any global meaning of the anonymous role token while preserving the within-context equality and membership relation. A bag of word 1--2 grams followed by a linear classifier can accumulate evidence from each field, but it has no explicit operator for comparing an arbitrary role token in one field with role-conditioned permission facts in another.

This is an interpretation of the representation, not an impossibility theorem. Hash collisions, optimization, regularization, and the available sample size can also affect the result, and a sufficiently rich feature map could encode interactions for a linear decision layer. What the experiment establishes is narrower: the frozen serializer/vectorizer/classifier pipeline did not learn the required composition under the corrected construction. The all-positive TEST predictions make that failure concrete rather than merely statistical.

\section{Discussion}
\paragraph{The benchmark survives the model failure.}
The matched construction establishes an information pattern that relevance alone cannot represent: fixed $(p,S)$ can support opposite labels when $\tau$ changes. The symbolic oracle confirms that the corrected labels remain deterministic functions of the underlying policy semantics. The learned controller's failure therefore does not invalidate the construct; it limits the claim about the tested representation.

\paragraph{Controls change what can be concluded.}
Without $t(\tau)$, the initial $99.19\%$ result would look like strong relational learning. Without the oracle, the corrected $50\%$ result could be misread as evidence that anonymization made the task incoherent. Taken together, the controls reveal the actual causal structure: one version is shortcut-solvable, the other is policy-solvable but not by the frozen learner. For policy-sensitive reasoning, a headline joint score is therefore insufficient evidence of mechanism.

\paragraph{The next hypothesis is comparative, not rhetorical.}
The current result does not prove that explicit relational architectures are necessary. Generic nonlinear interaction, attention, structured neural models, and programmatic hybrids were not tested. A stronger study should compare these alternatives on fresh development data and reserve a new untouched holdout for confirmation. The current TEST set has been inspected and must not become a tuning target.

\paragraph{Three benchmark-design consequences.}
The shortcut episode yields three general requirements for policy-sensitive reasoning experiments. \emph{Matched information controls} are needed because high joint accuracy does not identify which input field carries the signal. \emph{Post-transformation semantic oracles} are needed because a leakage repair can accidentally destroy the target relation while making the learned score look appropriately difficult. Finally, \emph{fresh confirmation after discovery} is needed because diagnosing a failed TEST result necessarily reveals information about that holdout. These requirements are not special to RBAC; they follow whenever the scientific claim concerns a relation among multiple structured inputs rather than prediction accuracy alone.

\section{Conclusion}
Governed Deduction separates two control questions that learned reasoning often conflates: whether a premise is useful and whether its use is authorized. The idea inherits substantial prior art from access control, authorization logic, and guarded inference; the contribution here is an empirical identification design that makes authority visible as a distinct label source and tests it against matched one-sided controls.

On 4,461 policy-grounded matched pairs, the initial implementation appears almost solved until a transition-only control exposes perfect label leakage. After a frozen, label-independent correction, all learned linear controllers collapse to chance while a symbolic policy oracle remains perfect. We therefore report a controlled negative result, not a solved mechanism. The benchmark object and audit design survive; the tested learner does not. Any stronger learned claim requires a new model hypothesis and fresh confirmatory evidence.

\section{Limitations}
The negative result is specific to one frozen representation and model family: hashed word 1--2 grams with a linear logistic classifier. It does not establish failure of nonlinear neural models or a necessity theorem for explicitly relational architectures. The symbolic oracle's $100\%$ performance shows that the labels remain resolvable from policy semantics.

The benchmark is derived from RBAC-augmented text-to-SQL data rather than naturally occurring mathematical proof traces. This is deliberate because the source provides independently grounded authorization labels, but it limits ecological claims about theorem proving and general language-model reasoning. The context-local role permutation is also a controlled intervention rather than a deployment distribution.

Finally, the benchmark measures local authorization prediction, not end-to-end downstream utility or security. A model that predicts the admission relation correctly would still require a separate study of how enforcement affects full reasoning behavior. Conversely, downstream task performance cannot compensate for a failed mechanism-identification test.

\section*{Ethical Considerations}
The work uses an existing benchmark of database access-control policies and does not introduce personal data, user profiling, or real-world decisions about individuals. The main risk is overgeneralization: a local authorization benchmark could be misread as a system-level security guarantee. We therefore scope claims to the represented policy invariants and explicitly retain the failed learned result.

\appendix
\section{Reproducibility and Audit Provenance}
The anonymous supplement preserves the shortcut-bearing run and the leakage-controlled confirmatory run as separate artifacts. The external Role-Conditioned Refusals source is pinned by repository commit and evidence hashes. The 4,461-pair file, split policy, serializer, training configuration, trainer, required arms, selection rule, bootstrap analysis, and TEST execution receipts are cryptographically bound in the supplied manifests. The manuscript does not regenerate or modify the frozen predictions.

\section{AI Use Disclosure}
Generative AI tools were used during author-side drafting, literature triage, code auditing, and execution-governance construction. Scientific claims, citations, source identities, experiment definitions, and reported numerical results were checked against the frozen artifacts and primary sources.

\end{document}